\def\BibTeX{{\rm B\kern-.05em{\sc i\kern-.025em b}\kern-.08em
    T\kern-.1667em\lower.7ex\hbox{E}\kern-.125emX}}
\DeclareMathOperator*{\argmin}{arg\,min}
\theoremstyle{plain}
\newtheorem{theorem}{Theorem}[section]
\newcommand{\newtext}[1]{#1}
\newcommand{\oldtext}[1]{}
\begin{document}
\title{CACTO: Continuous Actor-Critic with Trajectory Optimization---Towards global optimality}
\author{Gianluigi Grandesso$^{1}$, \newtext{Elisa Alboni$^{2}$}, Gastone P. Rosati Papini\oldtext{$^{2}$}\newtext{$^{3}$, \textit{Member, IEEE}}, Patrick M. Wensing\oldtext{$^{3}$}\newtext{$^{4}$, \textit{Member, IEEE}} and Andrea Del Prete\oldtext{$^{4}$}\newtext{$^{5}$, \textit{Member, IEEE}}
\thanks{Manuscript received: November 10th, 2022; Revised: February 15th, 2023; Accepted: March 23rd, 2023.}
\thanks{This paper was recommended for publication by
Editor Lucia Pallottino upon evaluation of the Associate Editor and Reviewers’
comments. This work was supported by the Italian MUR through the ``Departments of Excellence'' programme and by PRIN project DOCEAT (CUP n. E63C22000410001).}
\thanks{$^{1}$, $^{2}$\newtext{ \oldtext{and $^{4}$}}\newtext{, $^{3}$ and $^{5}$} are with the Dept. of Industrial Engineering, University of Trento, Italy
        [{\tt\footnotesize gianluigi.grandesso; \newtext{\oldtext{\tt\footnotesize andrea.delprete; \tt\footnotesize gastone.rosatipapini}}\newtext{ \tt\footnotesize elisa.alboni; \tt\footnotesize gastone.rosatipapini;
        \tt\footnotesize andrea.delprete}}]\tt\footnotesize@unitn.it}
\thanks{\oldtext{\newtext{$^{3}$}}\newtext{$^{4}$} is with the Dept. of Aerospace and Mechanical Engineering, University of Notre Dame, Indiana, USA
        {\tt\footnotesize pwensing@nd.edu}}
\thanks{Digital Object Identifier (DOI): see top of this page.}
\thanks{\textcopyright 2023 IEEE.  Personal use of this material is permitted.  Permission from IEEE must be obtained for all other uses, in any current or future media, including reprinting/republishing this material for advertising or promotional purposes, creating new collective works, for resale or redistribution to servers or lists, or reuse of any copyrighted component of this work in other works.
}
}
\markboth{IEEE Robotics and Automation Letters. Postprint Version. Accepted April, 2023}
{Grandesso \MakeLowercase{\textit{et al.}}: CACTO: Continuous Actor-Critic with Trajectory Optimization---Towards global optimality} 

\maketitle
\normalem
\begin{abstract}
This paper presents a novel algorithm for the continuous control of dynamical systems that combines Trajectory Optimization (TO) and Reinforcement Learning (RL) in a single framework. The motivations behind this algorithm are the two main limitations of TO and RL when applied to continuous nonlinear systems to minimize a non-convex cost function. Specifically, TO can get stuck in poor local minima when the search is not initialized close to a ``good'' minimum. On the other hand, when dealing with continuous state and control spaces, the RL training process may be excessively long and strongly dependent on the exploration strategy.
Thus, our algorithm learns a ``good'' control policy via TO-guided RL policy search that, when used as initial guess provider for TO, makes the trajectory optimization process less prone to converge to poor local optima.
Our method is validated on several reaching problems featuring non-convex obstacle avoidance with different dynamical systems, including a car model with 6\oldtext{d}\newtext{D} state, and a 3-joint planar manipulator. Our results show the great capabilities of CACTO in escaping local minima, while being more computationally efficient than the \newtext{Deep Deterministic Policy Gradient (}DDPG\newtext{)}\newtext{ and Proximal Policy Optimization (PPO)} RL algorithm\newtext{s}.
\end{abstract}

\begin{IEEEkeywords}
Trajectory optimization, reinforcement learning, continuous control.
\end{IEEEkeywords}
\vspace*{-0.01cm}
\section{Introduction}

When a model of the system to be controlled is available, one of the most common and flexible techniques to compute optimal trajectories is gradient-based Trajectory Optimization (TO). Starting from a specific initial state, TO can find the control sequence that minimizes a cost function representing the task to be accomplished, where the system's dynamics and possible state and control limits are considered as constraints. Such a powerful framework has led to excellent results when the problem is convex or slightly non-convex, especially when used in a Model Predictive Control (MPC) fashion. For example, it has been successfully employed to control high-dimensional nonlinear systems such as quadrupeds and humanoids~\cite{miniCheetah,mpc_quadruped,humanoid1,humanoid2}. However, when the task to be accomplished requires a highly non-convex cost function and/or the dynamics is highly nonlinear, the presence of multiple local minima, some of which are of poor quality (i.e., associated to a cost that is significantly worse than the global minimum), often prevents TO from finding a satisfying control trajectory. A possible solution to this problem is providing TO with a good initial guess, which turns out to be very complex---not to say impossible---without a deep prior knowledge about the system, which is often unavailable in practice. The IREPA algorithm~\cite{irepa} tackles this problem by building a kinodynamic Probabilistic Road Map (PRM) and approximating the \emph{Value function} and control policy, which is then used to warm-start an MPC. The method produced satisfying results on a 3-DoF system, but it is limited to problems with a fixed terminal state, and scaling to high dimensions seems not trivial due to its need to explicitly store the locally optimal trajectories in the PRM edges.
 
Approaches that can find the global optimum of non-convex problems exist, and are based on the Hamilton-Jacobi-Bellman equation~\cite{BardiMartino1997} (for continuous-time problems) or Dynamic Programming (for discrete-time problems). However, these methods suffer from the curse of dimensionality, which restricts their applicability to systems with extremely few degrees of freedom. Some efficient solutions exist, but for specific problems, such as simple integrator dynamics with control-independent cost~\cite{Tsitsiklis1995, Polymenakos1998}. Alternatively, the tensor-train decomposition~\cite{Gorodetsky2015} has been used to reduce the computational complexity of \emph{value iteration}, by representing the \emph{Value function} in a compressed form. However, the approach has been tested on stochastic optimal control problems with at most 7-dimensional state and scalar control. Subsequent improvements of that approach~\cite{Stefansson2016} could solve problems with a 12-dimensional state, but required some restrictions on the form of the dynamics and cost.

On the other hand, in the recent years deep RL has shown impressive results on continuous state and control spaces, which represented its greatest challenge until 2015, when the ground-breaking algorithm \oldtext{Deep Deterministic Policy Gradient (DDPG)}\newtext{DDPG} \cite{ddpg} was presented. Successively, many variants have been developed to further improve its performance, such as TD3, SAC, and RTD3~\cite{td3,sac,rtd3}. In 2021, Zhang \textit{et al.}~\cite{ae-ddpg} managed to speed up DDPG's training time and enhance the effectiveness of its learning with their expansion called AE-DDPG, by making the agent latch on ``good'' trajectories very soon through the use of asynchronous episodic control and improving exploration with a new type of control noise. However, deep RL is intrinsically limited by \oldtext{the very long time needed to explore the state-control space}\newtext{its low sample efficiency, which implies the need for a considerable number of interactions with the environment to reach a good performance level}. 

To mitigate this problem, Levine \textit{et al.} \cite{gps} proposed to guide the exploration process by using Differential Dynamic Programming (DDP) as a generator of guiding samples that push the policy search towards low-cost regions. However, the \emph{imitation} component of this approach makes its capability to find an optimal control policy strongly dependent on the quality of the guiding samples. This downside applies also to the method proposed by Mordatch and Todorov~\cite{mordatch} combining policy learning and TO through the Alternating Direction Method of Multipliers (ADMM), which involves imitation in that the policy learning problem is reduced to a sequence of trajectory optimization and regression problems. In general, all the methods that are \emph{imitation}-oriented (\textit{e.g.} \cite{DDPGfD,PsiPhi,guidedTO}) suffer from the same limitation: the quality of what the RL algorithm can learn is limited by the quality of the demonstrations or guiding trajectories found by TO.

In a similar spirit to~\cite{gps}, but with an imitation-free approach, our algorithm aims to mitigate both problems: the local minima issue affecting TO, and the \oldtext{long training time}\newtext{low sample efficiency} of RL. Our main contribution is an algorithm named CACTO (Continuous Actor-Critic with TO). The algorithm combines TO and RL in such a way that their interplay guides the search towards the globally optimal control policy. Our main contributions are:
\begin{enumerate}
    \item We present a novel RL algorithm that exploits TO to speed up the search, and that (contrary to previous work) does not rely on imitation.
    \item Our tests show that initializing TO with the CACTO policy outperforms standard warm-starting techniques. 
    \item We have proved that, considering a discrete-space version of CACTO with look-up tables instead of deep neural networks (DNN), the policy approaches global optimality 
    as the algorithm proceeds.
\end{enumerate}



\vspace{-5pt}
\section{Method}


This section presents an optimization algorithm to solve a finite-horizon discrete-time \oldtext{OCP}\newtext{optimal control problem (OCP)} that takes the following general form: 
\begin{mini!}|l|[2]<b>
{\scriptstyle{X, U}}
{J(X,U) = \sum_{k=0}^{T \newtext{-1}} l_k\left(x_k,u_k\right) \newtext{+ \ l_T\left(x_T\right)} 
\label{OCP_objective}}
{\label{OCP_optimizationProblem}}
{}
\addConstraint{x_{k+1}}{= f_k(x_k,u_k)  \quad \forall k=0\dots T-1\label{OCP_dyn_const}}
\addConstraint{|u_k|}{\in \mathcal{U} \quad\quad\quad\quad \, \ \,  \forall k=0\dots T-1\label{OCP_path_const}}
\addConstraint{x_0}{= x_{init}\label{OCP_ICS}}
\end{mini!}
where the state and control sequences $X=x_{0\dots T}, U=u_{0\dots T-1}$, with $x_k \in \mathbb{R}^{n}$ and $u_k \in \mathbb{R}^{m}$, are the decision variables. The cost function $J(\cdot)$ is defined as the sum of the running costs $l_k\left(x_k,u_k\right)$ and the terminal cost $l_T(x_T\oldtext{,\cdot})$. The dynamics, control limits and initial conditions are represented by \eqref{OCP_dyn_const}, \eqref{OCP_path_const} and \eqref{OCP_ICS}\newtext{, with $\mathcal{U} = \{u \in \mathbb{R}^m: |u| \le u_{max} \}$}.
\begin{figure}[t]
    \includegraphics[width = \columnwidth]{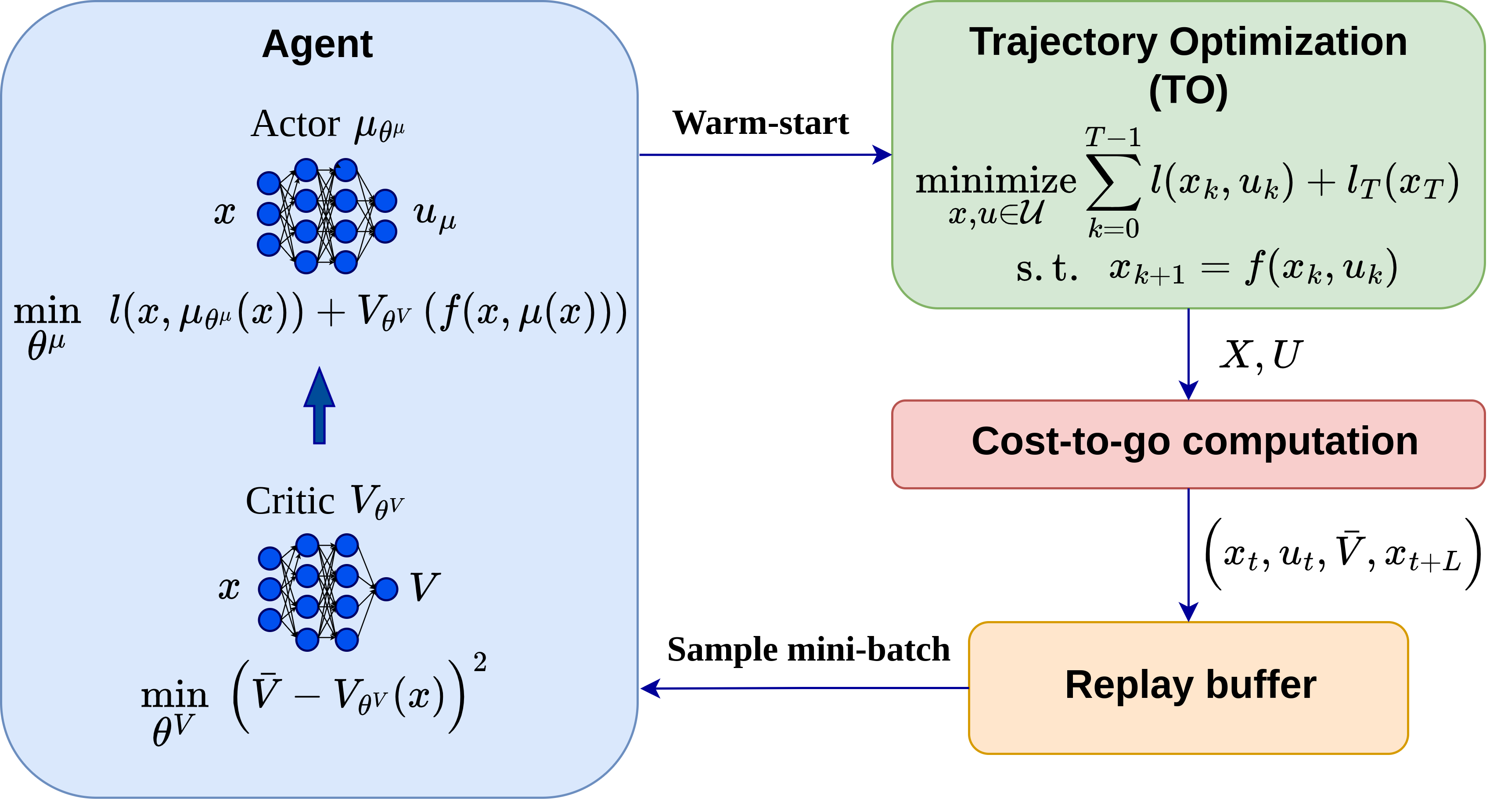} 
    \caption{Scheme of CACTO: at each episode, a TO problem is warm-started with a policy rollout and solved, then we compute the cost-to-go associated with each state of the trajectory found by TO and store the related transition in the replay buffer. Finally, we update critic and actor by sampling a mini-batch of transitions from the replay buffer.}
    \label{CACTO_scheme}
    \vspace{-8pt}
\end{figure}
%

Our algorithm combines ideas from RL and TO. The core idea is to exploit TO to guide the exploration towards low-cost regions, \newtext{\oldtext{so as to speed up the learning process}}\newtext{so as to make the learning process more efficient}. In turn, a rollout of the currently learned policy is used to initialize the next TO problem. In this way, the RL and TO components of the algorithm help each other, making convergence faster: as training proceeds, the TO solver is provided with better initial guesses and this increases the chance of obtaining better solutions; in turn, these trajectories drive the agent along lower-cost paths, pushing the critic towards a better approximation of the optimal \emph{Value function} and, consequently, the actor towards the optimal policy. Fig.~\ref{CACTO_scheme} shows a scheme of CACTO.

\vspace{-8pt}
\subsection{\newtext{TO and RL} Notation}

Besides the control sequence $[u_0, \dots, u_{T-1}]$, we also make use of the term ``\textit{policy}'' $\pi(x)$, that belongs to the RL community and refers to the mapping from states $x$ to controls $u$ (``actions'' in the RL language).
%
\oldtext{\newtext{Another fundamental concept is that of the \emph{Value function}}.}In the optimal control (OC) community, the \emph{Value function} of a policy $\pi$ describes the cost-to-go following $\pi$ from $x$ at time $t$, namely:
\begin{equation*}
V_t^{\pi}(x_t)\oldtext{^{OC}} = \sum_{k=t}^{T \newtext{-1}}l_{k}(x_k,\pi(x_k)) \newtext{\ + \ l_T\left(x_T\right)}
\end{equation*}
%
In RL, the \emph{Value function} is the sum of the rewards starting from a state $x$ at time $t$ and following a policy $\pi$; therefore, we aim to maximize it, opposite to what we do in OC. For reasons of clarity and to keep the notation simple, from now on we always refer to the \emph{Value function} $V(x_t)$ in an OC sense.
The same reasoning applies to the \emph{Q-value function} $Q(x_t,u_t)$, which, in RL, is given by the sum of the rewards after applying $u$ from $x$ at time $t$ and following $\pi$ thereafter, whereas in the OC community, it represents the cost-to-go after performing $u_t$ from $x_t$ and following $\pi$ thereafter:
\begin{equation*}
Q_t^{\pi}(x_t, u_t)\oldtext{^{OC}} = l_t(x_t,u_t)+\sum_{k=t+1}^{T \newtext{-1}}l_{k}(x_k,\pi(x_k)) \newtext{\ + \ l_T\left(x_T\right)}
\end{equation*}
%
In the following, to simplify the notation, we consider time as the last component of the state vector ($x[n]=t$), so that we can drop the explicit time dependency for the  \emph{Value function}, policy, dynamics, and running cost.

\vspace{-8pt}
\subsection{Algorithm Description}
As shown in Fig.~\ref{CACTO_scheme}, our algorithm can be broken down into four phases. In the TO phase (green block), the optimal state and control trajectories are computed considering a random initial state. Then, the cost-to-go is computed (red block) for each state of the optimal trajectory and stored in a buffer (orange block). This buffer is then sampled to update both the critic and actor (blue block). Finally, to close the loop, a rollout of the actor is used to warm-start the next TO problem.  

A more detailed description is reported in Algorithm~\ref{pseudocode}. In the initial phase (lines~\ref{line1}-\ref{line4}), the critic $V(x|\theta^V)$ and actor $\mu(x|\theta^\mu)$ networks (``\emph{Agent}'' blue block in Fig.~\ref{CACTO_scheme}) are initialized, as well as the target critic network $V'(x|\theta^{V'})$ (copying the weights of the critic), and a buffer $R$ is created. This buffer will store the transitions $(x_t,u_{\scriptscriptstyle{TO},t},\hat{V}_t,x_{\min(t+L+1,T)})$, where $\hat{V}_t$ is the partial cost-to-go until state $x_{\min(t+L+1,T)}$ after a rollout of either $L$ steps, or the number of steps to reach $T$ (whichever is lower). In this way, to update the critic one can choose between n-step Temporal Difference (TD) ($L=n-1$) and Monte-Carlo ($L=T-1-t$) by setting the hyperparameter $L$~\cite{TD_MC}. 
\setlength{\textfloatsep}{9pt}
\begin{algorithm}[t]
\caption{CACTO}\label{pseudocode}
\textbf{Inputs:} dynamics $f(\cdot,\cdot)$, running cost $l(\cdot,\cdot)$, terminal cost $l_T(\cdot)$, $T$, $M$, $S$, $K$, $L$ \label{line1} \\ 
\textbf{Output:} trained control policy $\mu(x)$ \\
\vspace{1mm}
$\theta^V \leftarrow$ random, $\theta^\mu \leftarrow$ random, $\theta^{V'} \leftarrow \theta^V$\\
Initialize replay buffer $R$ \label{line4}\\ 
\For{$episode\leftarrow 1$ \KwTo $M$}{\label{line5}
    \vspace{1mm}
    $x_0 \leftarrow$ random,  $x_{\scriptscriptstyle{TO},0}^{\circ} \leftarrow x_0$ \label{line6}\\
    \vspace{1mm}
    \For(\ \qquad\qquad\quad \ \ \emph{(policy rollout)}){$t\leftarrow x_0[n]$ \KwTo $T$}{
        \label{line7}
        \vspace{1mm}
        $u_{\mu,t}^{\circ} \leftarrow \mu(x_{\mu,t}^{\circ}|\theta^\mu)$\\
        \vspace{1mm}
        $x_{\mu,t+1}^{\circ} \leftarrow {\rm Environment}(x_{\mu,t}^{\circ},u_{\mu,t}^{\circ})$
        \vspace{1mm}
    }\label{line10}
    $(x_{\scriptscriptstyle{TO}}^{\circ},u_{\scriptscriptstyle{TO}}^{\circ}) \leftarrow (x_{\mu}^{\circ},u_{\mu}^{\circ})$\qquad\qquad\quad \ \ \emph{(TO warm-start)}\label{line11}\\
    \vspace{1mm}
    Solve TO problem and get control trajectory $u_{\scriptscriptstyle{TO}}$\\
    Agent's initial state $\leftarrow x_0$\\
    \For(\, \qquad\qquad\quad\emph{(episode rollout)}){$t\leftarrow x_0[n]$ \KwTo $T$}{\label{line14}
        \vspace{1mm}
        $x_{t+1}, l_t \leftarrow {\rm Environment}(x_t,u_{\scriptscriptstyle{TO,}}$$_t)$
        \vspace{1mm}
    }\label{line16}
    \For{$t\leftarrow x_0[n]$ \KwTo $T$}{\label{line17}
        Compute partial cost-to-go:
        \medmuskip=0mu
        \thinmuskip=0mu
        $\hat{V}_t = \sum\limits_{j=t}^{\min(t+L, T-1)} \ l_j$\\
         $R\leftarrow$$(x_t,u_{\scriptscriptstyle{TO}}$$_{,t}$$,\hat{V}_t,x_{\min(t+L+1,T)})$\\
    }\label{line20}    
    \If{$episode$ \% $e_{update}=0$}{
        \vspace{1mm}
        \For(\qquad\quad \ \, \emph{(critic $\&$ actor update)}){$k\leftarrow 1$ \KwTo $K$}{
            \vspace{1mm}
            Sample minibatch of $S$ transitions \label{line23} $(x_i,u_{\scriptscriptstyle{TO},i},\hat{V}_i,x_{\min(i+L+1,T)})$\\
            Compute cost-to-go: \label{line24}
            $\bar{V}_i = \begin{cases} 
            \hat{V}_i \quad &\text{if} \, i+L+1 > T \\
            \hat{V}_i + V'(x_{i+L+1}) \quad &\text{otherwise}
            \end{cases}$
            \vspace{1mm}
            Update critic by minimizing the loss over $\theta^V$: $L_c=\frac{1}{S}\sum\limits_{i=1}^{S} \left(\bar{V}_i - V(x_i|\theta^V) \right)^2 $\label{line25}\\
            Update actor by minimizing the loss over $\theta^\mu$: $L_a = \frac{1}{S}\sum\limits_{i=1}^{S}Q(x_i,\mu(x_i| \theta^{\mu}))$\label{line26}\\
            Update target critic: $\theta^{V'} \leftarrow \tau\theta^V+\left(1-\tau\right)\theta^{V'}$ \\ 
        }    
    }
}
\end{algorithm}\label{line30}
After the initialization phase, $M$ episodes are performed (lines~\ref{line5}-\ref{line30}) starting from a random initial state $x_0$ (line~\ref{line6}).
\oldtext{Since the state contains the time (as the last component of the state vector), the length of the episodes is variable.} 
\newtext{The state vector has a dimension of $n$ and includes the time as its last component. The starting time index for each episode (and partial cost-to-go computation) is based on $x_0[n]$ (lines~\ref{line14} and~\ref{line17}), so the length of each episode can vary.}
At the beginning of each episode, the state and control variables $(x_{\scriptscriptstyle{TO}}^{\circ},u_{\scriptscriptstyle{TO}}^{\circ})$ of a TO problem are initialized (line~\ref{line11}) with a rollout $(x_{\mu}^{\circ},u_{\mu}^{\circ})$ of the policy network $\mu(x|\theta^\mu)$ (lines~\ref{line7}-\ref{line10}, ``\emph{Warm-start}'' arrow in Fig.~\ref{CACTO_scheme}). The TO problem is then solved, the control inputs $u_{\scriptscriptstyle{TO}}$ are applied starting from $x_0$, and the resulting costs are computed and saved  (lines~\ref{line14}-\ref{line16}).
Then, for each step of the episode the partial cost-to-go $\hat{V}_t$ is computed and the related transition is saved in $R$ (lines~\ref{line17}-\ref{line20}, from ``\emph{Cost-to-go computation}'' block to ``\emph{Replay Buffer}'' one in Fig.~\ref{CACTO_scheme}). 
Finally, every $e_{update}$ episodes the critic and the actor are updated: for $K$ times a minibatch of $S$ transitions is sampled from $R$ (line~\ref{line23}, ``\emph{Sample mini-batch}'' arrow in Fig.~\ref{CACTO_scheme}), for each of them the complete cost-to-go $\bar{V}_i$ is computed \newtext{(line~\ref{line24})} by \newtext{either} adding the tail of the Value to $\hat{V}_i$ \newtext{\oldtext{(line~\ref{line24})}}\newtext{or copying the partial cost-to-go, which is the cost-to-go itself in case the lookahead window exceeds the episode length $T$. Then} the critic and actor loss functions are minimized (lines~\ref{line25}-\ref{line26}). Considering that CACTO deals with finite-horizon problems, it is worth noting that $V'$ is used only when $i+L+1\neq T$, namely when the costs-to-go are computed with n-step TD.
The critic and actor loss functions are respectively the mean squared error between the costs-to-go and the values predicted by the critic $\left( \bar{V}_i - V(x_i) \right)^2$ and the Q-value, namely $Q(x_t,u_t)=l(x_t,u_t)+V(x_{t+1})$, which represents the policy's performance.

\vspace{-5pt}
\subsection{Differences with respect to DDPG}
Our approach takes inspiration from DDPG, but presents a few key differences, which we highlight in this subsection. A first difference is that we replaced the \emph{Q-value} with the \emph{Value function}, which is approximated by the critic network. By doing so, the complexity is reduced since the critic's input is only the state, thus there is no need to explore the control space. Keeping a \emph{Q-value} approach instead, one could not explore the control space using only TO because only the locally-optimal control inputs would be chosen. Therefore, one should alternate the use of TO trajectories with other exploration techniques (\textit{e.g.}, acting greedily \textit{w.r.t.} the \emph{Q-value function} and adding some noise), which would dilute the benefits of our algorithm, making it more similar to standard RL. 
Contrary to DDPG, this algorithm is on-policy, meaning that the critic estimates the Value of the exploratory policy being followed. This is the policy obtained by initializing TO with rollouts of the current policy network. 
This implies that it is important to size $R$ not too large, so that only the most recent TO trajectories obtained by warm-starting TO with similar policies are stored; in this way, the critic, after its update, will approximate the \emph{Value function} associated to that policy. Otherwise, the risk is having a critic that represents a meaningless \emph{Value function}, because it used trajectories generated with very different policies.

Another difference with respect to DDPG is that we consider finite-horizon problems because TO cannot be used to solve arbitrary infinite-horizon problems. Therefore, the \emph{Value function} is time-dependent. We address this by considering time as the last component of the state vector ($x[n]=t$).

As in DDPG, we also make use of a target network $V'$ to improve the stability of our algorithm. It is a copy of the critic network, whose weights $\theta^{V'}$ are updated slower than the critic's ones $\theta^V$ by performing only partially the update of the critic, that is $\theta^{V'} \leftarrow \tau\theta^{V}+\left(1-\tau\right)\theta^{V'}$ with $\tau\ll1$ being the target learning rate.
\vspace{-5pt}
\subsection{Implementation Details}
Each TO problem was solved using collocation, which was  available in the \emph{Pyomo} library~\cite{pyomo} and solved with the nonlinear programming solver \emph{IpOpt}~\cite{ipopt}.

\newtext{The following details are not fixed as part of the proposed method. They are reported here for the sake of clarity and to help understand how the results in Section~\ref{Sec:results} have been obtained.} Concerning the critic's neural network, we used a structure with two small preprocessing fully-connected layers with respectively $16$ and $32$ neurons, followed by two other  fully-connected hidden layers with $256$ neurons each, all with \emph{ReLU} activation functions. For the actor instead we used a residual neural network with two fully-connected hidden layers with $256$ neurons each and \emph{ReLU} activation functions, whose outputs are added and passed as inputs to the last layer with a \emph{tanh} activation function to bound the final controls in $[-1,+1]$. We chose a residual neural network to better propagate the gradient to the first layer and limit the effects of the potential combination of the vanishing gradient problem due to the \emph{tanh} in the last layer and the dying ReLU problem~\cite{vanish_grad_dying_relu} which would prevent the network from continuing to learn. We finally multiply the output of this last layer by the control upper bound. To stabilize the training of these networks, the inputs are normalized and L2 weight and bias regularizers are used in each layer, with weight equal to $10^{-2}$.

The critic and actor loss functions were minimized with a stochastic gradient descent optimizer \textit{Adam}~\cite{adam}. We set the maximum number of episodes to $80000$ and stopped early the training of the neural networks if the results were satisfactory. 
\vspace{-20pt}
\subsection{\oldtext{Policy improvement}\newtext{Global convergence} proof for discrete spaces}
As for most continuous-space RL algorithms, it is hard to give any formal guarantee of convergence for CACTO.
However, we can show that, considering a discrete-space version of CACTO using look-up tables instead of DNN, the \oldtext{computed policy keeps improving at each iteration}\newtext{algorithm converges to a globally optimal policy}. 
\newtext{This version of CACTO performs sweeps of the entire state space as in classic Dynamic Programming (DP) algorithms (\textit{e.g.}, Policy Iteration~\cite{SuttonBarto}), rather than the asynchronous approach characterizing RL algorithms.
Moreover, we consider the Policy Iteration version of our algorithm meaning that each phase of policy evaluation and policy improvement converges before the other begins.}
This proof does not extend easily to the original CACTO algorithm, but it gives us an insight into the soundness of its key principle.
\oldtext{To keep the proof simple, we take the following assumptions.}
\setlength{\textfloatsep}{20.0pt plus 2.0pt minus 4.0pt}
\begin{theorem}
\newtext{Consider the following assumptions.}
\begin{itemize}
    \item \newtext{State and control spaces are finite.}
    \item \newtext{The optimal Value function $V^*$ is bounded.}
    \item \newtext{We have access to a discrete-space TO algorithm that can perform a local search and return a trajectory with a cost not greater than the cost of the initial guess.}
\end{itemize}
Let us define $\newtext{{}^k} \pi_{TO}$ as the control policy obtained solving a TO problem using a rollout of the policy $\newtext{{}^k} \pi$ as initial guess.
\newtext{Then, starting from an arbitrary initial policy ${}^0 \pi$, the following algorithm converges to the optimal Value function $V^*$ and an optimal policy $\pi^*$:
\begin{align*}
{}^k V^{\pi_{TO}} &= \textrm{PolicyEvaluation}({}^k \pi_{TO}) \\
{}^{k+1} \pi &= \argmin_{u \in \mathcal{U}} [ l + {}^k V^{\pi_{TO}}]
\end{align*}
}
\end{theorem}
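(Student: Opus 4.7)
The plan is to adapt the convergence proof of classical Policy Iteration, treating the TO step as an additional policy improvement operator inserted between the standard policy evaluation and greedy improvement steps. The argument proceeds in three phases: first I would show that the sequence of Value functions associated with $\{{}^k \pi\}$ is monotonically non-increasing pointwise; second I would invoke monotone convergence using the finiteness of the state and control spaces together with the lower bound $V^*$; third I would show that any fixed point of the iteration satisfies the Bellman optimality equation and is therefore globally optimal.

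The monotonicity phase rests on two improvement inequalities that I would chain together. For the TO step, the target inequality is $V^{{}^k\pi_{TO}}(x) \leq V^{{}^k\pi}(x)$ pointwise: the rollout of ${}^k\pi$ from $x$ has cost exactly $V^{{}^k\pi}(x)$, so by the assumption that TO returns a trajectory of cost no greater than the initial guess, the TO trajectory from $x$ has cost at most $V^{{}^k\pi}(x)$. Defining ${}^k\pi_{TO}(x)$ state-wise as the first control of TO solved from $x$ in a full DP-style sweep, a backward induction on the time component of the augmented state transfers this inequality to the closed-loop Value computed by PolicyEvaluation. For the greedy step, the classical Policy Improvement Theorem applies directly to ${}^{k+1}\pi = \argmin_{u \in \mathcal{U}}[l + {}^k V^{\pi_{TO}}]$, giving $V^{{}^{k+1}\pi} \leq V^{{}^k\pi_{TO}}$ pointwise. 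Chaining the two yields $V^{{}^{k+1}\pi} \leq V^{{}^k\pi}$.

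For convergence and optimality, finiteness of the state and control spaces implies there are only finitely many deterministic policies, so the monotonically non-increasing sequence of Values must stabilize after finitely many iterations at some fixed point $\pi^\infty$. At this fixed point the greedy update leaves the policy unchanged, which forces $V^{\pi^\infty}(x) = \min_{u \in \mathcal{U}}[l(x,u) + V^{\pi^\infty}(f(x,u))]$---the Bellman optimality equation. By the standard uniqueness of the Bellman fixed point in a finite MDP with bounded $V^*$, I would conclude $V^{\pi^\infty} = V^*$ and $\pi^\infty$ is a globally optimal policy.

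The main obstacle is the TO improvement inequality of the first step. The TO hypothesis as stated bounds only the cost of the open-loop TO trajectory from a given state, whereas PolicyEvaluation returns the closed-loop Value of the feedback policy ${}^k\pi_{TO}$, and the closed-loop execution may diverge from the original TO trajectory after the first step because re-solving TO at the next visited state need not reproduce the tail of the original TO solution. A clean argument likely requires either a careful backward induction that exploits the time-augmented state so that the first TO control dominates the first control of ${}^k\pi$ in a one-step-lookahead sense, or a mild strengthening of the TO assumption to enforce one-step consistency. Once this delicate step is formalized, the remaining pieces---Policy Improvement Theorem, monotone convergence over a finite policy set, and Bellman uniqueness---are textbook results that should slot in cleanly.
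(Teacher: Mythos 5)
Your proposal follows essentially the same route as the paper's proof: the TO-improvement inequality $V^{\pi_{TO}} \le V^{\pi}$, the classical Policy Improvement chain of inequalities for the greedy step, monotone convergence of the nonincreasing Value sequence bounded below by $V^*$, and the Bellman optimality equation at the fixed point. The only differences are cosmetic: the paper tracks monotonicity of $V^{{}^k\pi_{TO}}$ rather than of $V^{{}^k\pi}$, and invokes the monotone convergence theorem rather than finiteness of the policy set. The open-loop-versus-closed-loop subtlety you flag in the TO step is a genuine gap, but the paper does not resolve it either---it simply asserts $V_t^{\pi}(x) \ge V_t^{\pi_{TO}}(x)$ from the local-search assumption---so you are not missing any idea that the paper supplies.
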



\begin{proof}
Let us \oldtext{also }define $\pi'(x)$ as the policy obtained by \oldtext{choosing the control $u$}minimizing the \emph{Action-value function} of \oldtext{the TO policy}$\pi_{TO}$, namely $Q^{\pi_{TO}}(x,u)$\oldtext{, for all states $x$}:
\begin{equation}
    \label{pi_prime_def}
    \pi_t'(x_t) \triangleq \argmin_{u\newtext{ \in \mathcal{U}}} [ l_t(x_t,u) + V_{t+1}^{\pi_{TO}}(x_{t+1})] \quad \forall \, x_t, t
\end{equation}
where \newtext{$x_{t+1} = f_t(x_t, u)$, and} $V_{t+1}^{\pi_{TO}}(x_{t+1})$ is the cost-to-go following $\pi_{TO}$ from $x_{t+1}$ at time $t+1$.

Since TO always finds a solution that is at least as good as the provided initial guess,\oldtext{its Value cannot be greater than the Value of the policy used to compute the initial guess:}\newtext{we have that:}
%
\begin{equation}
    \label{VTO_Vpi}
    \newtext{V_t^{\pi}(x) \geq V_t^{\pi_{TO}}(x) \quad \forall \, x, t}
\end{equation}
\oldtext{Then, s}\newtext{S}ince $\pi'$ is the minimizer of $Q^{\pi_{TO}}$ (see \eqref{pi_prime_def}), we know that:
%
\begin{equation}
    \label{QTO_VTO}
    \newtext{V_t^{\pi_{TO}}(x) = 
    Q_t^{\pi_{TO}}(x,\pi_{TO,t}(x)) \geq
    Q_t^{\pi_{TO}}(x,\pi_t'(x))} 
\end{equation}
\oldtext{Thus,}\newtext{Starting from~\eqref{QTO_VTO}, and} following the same idea \oldtext{underlying}\newtext{of} the convergence proof of Policy Improvement~\cite{SuttonBarto}, we can write:
\begin{align}
    V^{\pi\newtext{_{TO}}}_t(x_t) & \geq \min_{u \newtext{\in \mathcal{U}}} [ l_t(x_t,u) + V_{t+1}^{\pi\newtext{_{TO}}}(x_{t+1})] \\
    \newtext{(\mathrm{by~\eqref{pi_prime_def}})} & \ \oldtext{\geq}\newtext{=} \ \oldtext{\min_{u}[}l_t(x_t,\pi_t'(x_t)) + V_{t+1}^{\pi_{TO}}(x_{t+1})\oldtext{]} \label{eq:l_s_pi_prime_V_TO}\\
    \newtext{(\mathrm{by~\eqref{QTO_VTO}})} & \geq l_t(x_t,\pi_t'(x_t)) + Q_{t+1}^{\pi_{TO}}(x_{t+1},\pi_{t+1}'(x_{t+1})) \ \oldtext{(\mathrm{by~\eqref{QTO_VTO}})} \label{eq:l_s_pi_prime_Q_TO} \\
    & =  l_t(x_t,\pi_t'(x_t)) + [l_{t+1}(x_{t+1},\pi_{t+1}'(x_{t+1})) 
    \label{eq:l_s_pi_prime_l_s_prime_pi_prime_V_TO} \\ 
    & \qquad + V_{t+2}^{\pi_{TO}}(x_{t+2}) ] \nonumber \\
    & \dots \nonumber && \\
    & \geq \sum_{k=t}^{T-1} l_k(x_k,\pi_k'(x_k)) + l_T(x_T) \triangleq V_t^{\pi'}(x_t) \label{eq:V_pi_prime} \\
    \newtext{(\mathrm{by~\eqref{VTO_Vpi}})} & \newtext{\ge V_t^{\pi'_{TO}}(x_t)}
\end{align}
To infer \eqref{eq:V_pi_prime} from \eqref{eq:l_s_pi_prime_l_s_prime_pi_prime_V_TO} we can iteratively apply the same reasoning we used to go from~\eqref{eq:l_s_pi_prime_V_TO} to \eqref{eq:l_s_pi_prime_l_s_prime_pi_prime_V_TO}. 
\oldtext{With this we have shown that the updated policy $\pi_t'(x)$ cannot be worse than the previous one $\pi_t(x)$, proving that the policy can only improve as the algorithm proceeds.}\newtext{This proves that the updated policy $\pi'_{TO}$ cannot be worse than the previous one $\pi_{TO}$. Since $V^{\pi_{TO}}$ is nonincreasing and always bounded from below by the optimal value $V^*$, it follows from the \emph{monotone convergence theorem} that $V^{\pi_{TO}}$ converges to a constant value $V^\infty$. At convergence we must have:
$$
V_t^\infty(x_t) = \min_{u \in \mathcal{U}} [ l_t(x_t,u) + V^\infty_{t+1}(x_{t+1})] \quad \forall \, x_t, t
$$
This is Bellman's optimality equation, which is a sufficient condition for global optimality, so it follows that the algorithm converges to the optimal Value ($V^\infty = V^*$) and to an optimal policy ($\pi^\infty = \pi^*$).
}
\end{proof}
\vspace{-5pt}
\section{Results}
\label{Sec:results}
This section presents the results of four different systems of increasing complexity\newtext{: single integrator, double integrator, Dubins car, and 3-joint planar manipulator. For each system, the task consists of finding the shortest path to a target point (related to the end-effector's position for the manipulator) while ensuring low control effort and avoiding an obstacle.} The aim is verifying the capability of CACTO to learn a control policy to warm-start TO so that it can find ``good'' trajectories, where other warm-starting techniques, such as using the initial conditions (ICS) or random values, would make it converge to poor local minima. \newtext{More precisely, the ICS warm-start uses the initial state (varying at each episode) as initial guess for the state variables in the OCP, for all time steps, and 0 as initial guess for the control variables.}  

For each system, we divided the XY-plane in a grid of $961$ points and, starting from each point with $0$ initial velocity, we compared the results of TO when warm-started with either the policy learned by CACTO, or a random initial guess, or the ICS for $x$ and $y$ (and 0 for the remaining variables). \oldtext{Table~\ref{tab2:comparison_warmstarting} reports the number of times that CACTO made TO find lower-cost solutions than the other two warm-starts, considering both the whole grid and the region from which it is harder for TO to find ``good'' solutions (\emph{Hard Region}).}\newtext{Table~\ref{tab1:time&Nupdates} reports the time and number of updates needed to train the critic and the actor for each test}.


\begin{figure}[t]
     \makebox[\columnwidth][c]{\includegraphics[trim={0 0.38cm 0 0.38cm},clip,width = 0.72\columnwidth]{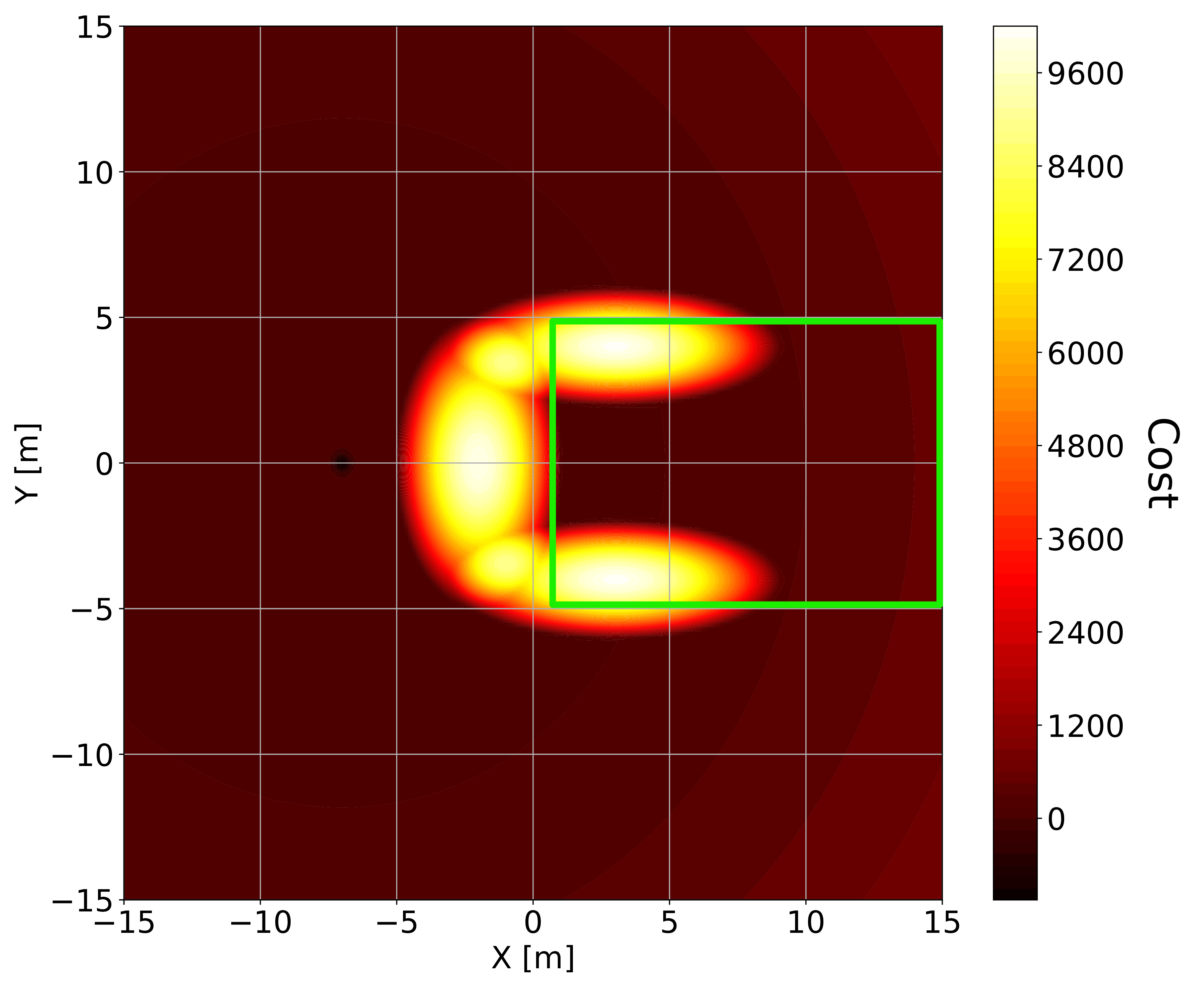}} 
    \caption{Cost function without the control effort term~\eqref{cost_term4}, considering a target point located at $[-7,0]$ with weights $w_d=100$, $w_p=5\cdot10^5$ and $w_{ob}=1\cdot10^6$. The green rectangle delimits the \emph{Hard Region}.}
    \label{fig_cost_function}
    \vspace{-5pt}
\end{figure}
In the four tests, we changed the dynamics of the system under analysis keeping the same highly non-convex cost function to ensure the presence of many local minima, that takes the following form:
\begin{align}
    & l(\cdot) = \frac{1}{c_2}\left(\sum_{i=1}^{4} l_i(\cdot)-c_1\right)  \label{cost_function}\\
    & l_1(\cdot) = w_{d}((x-x_{g})^2 + (y-y_{g})^2)  \label{cost_term1}\\
    & l_2(\cdot) = \frac{w_{p}}{\alpha_1}\ln(e^{-\alpha_1\left(\sqrt{(x-x_{g})^2 + c_2}+\sqrt{(y-y_{g})^2 + c_3} + c_4\right)}+1) \label{cost_term2}\\
    & l_3(\cdot) = \frac{w_{ob}}{\alpha_2}\sum_{i=1}^{3}-\ln{(e^{-\alpha_2\left(\frac{(x-x_{ob,i})^2}{(a_i/2)^2} + \frac{(y-y_{ob,i})^2}{(b_i/2)^2} - 1\right)} + 1)} \label{cost_term3}\\
    & l_4(\cdot) = w_{u}||u||_2^2  \label{cost_term4}
\end{align}
where $(x_g,y_g)$ are the target point coordinates, $c_1=10000$ and $c_2=100$ are two constants, while $c_3=0.1$, $c_4 = -2c_3-2\sqrt{c_3}$, $\alpha_1=50$ and $\alpha_2=50$ are the parameters that define the smoothness of the \emph{softmax} functions used to model respectively a cost valley in the neighborhood of the target and the three ellipses (centered at $(x_{ob,i},y_{ob,i})$ and with principal axes $a_i$ and $b_i$) forming the obstacle. The four terms composing the cost describe the task to be performed: \eqref{cost_term1} and \eqref{cost_term2} push the agent to reach the target point \newtext{(with weights $w_d$ and $w_p$, respectively)}, \eqref{cost_term3} makes it avoid the C-shaped obstacle represented by three overlapping ellipses \newtext{(with weight $w_{ob}$)} and \eqref{cost_term4} discourages it from using too much control effort \newtext{(with weight $w_u$)}. Fig.~\ref{fig_cost_function} illustrates the cost function without the control effort term~\eqref{cost_term4}, where the cost peaks correspond to the obstacle penalties~\eqref{cost_term3} and the cost valley to the term~\eqref{cost_term2}.
\oldtext{Table \ref{tab1:time&Nupdates} reports the time and number of updates needed to train the critic and the actor for each test.}
\begin{table}[!tb]
    \centering
    \caption{Time, number of DNN updates, number of environment steps, and learning rates ($LR_C$ for critic and $LR_A$ for actor) used for each test.}
    \renewcommand{\arraystretch}{1.3}
    \begin{tabular}{c|c|c|c|c|}
        & \textbf{\textit{Time [h]}}
        & \textbf{\textit{\# updates}}
        & \textbf{\textit{\# env. steps}}
        & $\boldsymbol{LR_C}$, $\boldsymbol{LR_A}$\\
        ~Single Int. &$5.46$ &$110k$ &$3.4M$ &$5e^{-3}, 1e^{-4}$\\
        \hdashline
        ~Double Int. &$7.29$ &$130k$ &$4.1M$ &$5e^{-3}, 5e^{-4}$\\
        \hdashline
        ~Dubins Car &$10.45$ &$260k$ &$4.1M$ &$1e^{-3}, 5e^{-4}$\\
        \hdashline
        ~Manipulator &$30.68$ &$385k$ &$6M$ &$1e^{-3}, 5e^{-5}$\\
    \end{tabular}
    \label{tab1:time&Nupdates}
    \vspace{-5pt}
\end{table}

The critic was updated with Monte-Carlo (MC) for the 2D point, and with 50-step TD for the car and the manipulator. The reason for using TD rather than MC lies in the stability of the training of critic and actor. Indeed, in the early phase of training, using Monte-Carlo could lead to large variations of the critic, and indirectly also of the actor, because the target Values would be extremely different from the current Values\oldtext{ computed by the critic network}. Moreover, in the early \newtext{training} phase\oldtext{ of training} TO could compute extremely poor trajectories because of the poor\oldtext{ quality of the} initial guess provided by the actor. In turn, these poor trajectories result in hard-to-learn \emph{Value functions}. Therefore, we have empirically observed that these two effects can destabilize the training, leading to either longer training times, or even divergence of the algorithm.

\newtext{To summarize the results, Table~\ref{tab2:comparison_warmstarting} reports the number of times that CACTO made TO find lower-cost solutions than the other two warm-starts, considering both the whole grid and the region from which it is harder for TO to find ``good'' solutions (\emph{Hard Region})}.
\begin{table*}[t]
    \centering
    \caption{Percentage of the time that warm-starting TO with CACTO leads to lower costs than using random initial guesses (CACTO vs. Random) and the initial conditions for x and y and 0 for the remaining variables (CACTO vs. ICS) as initial guess. Also the percentages when CACTO has lower or equal cost (\emph{i.e.}, including ties) as its competitor are reported. The best result out of 5 runs is reported for random warm-start. For each system, we sampled from a 31x31 grid for the initial x and y coordinates (those of the end effector for the manipulator) and setting the remaining initial state components to 0 (the joint positions of the manipulator were obtained by fixing the orientation of the end-effector and inverting the kinematics). The \emph{Hard Region} is the region delimited by $x\in[1,15]$ m ($[1,23]$ m in the manipulator test) and $y\in[-5,5]$ m.}
    \renewcommand{\arraystretch}{1.2}
    \begin{tabular}{c|cc|cc|}
        \multirow{3}{*}{\textbf{System}}
        &\multicolumn{2}{c|}{ \textbf{Whole Space}}
        &\multicolumn{2}{c|}{\textbf{{Hard Region}}} \\
        &\multicolumn{1}{c}{\textbf{\textit{CACTO $\boldsymbol{<}$ ($\boldsymbol{\leq}$) Random}} }
        &\multicolumn{1}{c|}{\textbf{\textit{CACTO $\boldsymbol{<}$ ($\boldsymbol{\leq}$) ICS}}}
        &\multicolumn{1}{c}{\textbf{\textit{CACTO $\boldsymbol{<}$ ($\boldsymbol{\leq}$) Random}}}
        &\multicolumn{1}{c|}{\textbf{\textit{CACTO $\boldsymbol{<}$ ($\boldsymbol{\leq}$) ICS}}}\\
        ~Single Int. &$99.88\%$ ($99.88\%$) &$14.49\%$ ($99.88\%$) &$99.11\%$ ($99.11\%$) &$91.96\%$ ($99.11\%$)\\
        \hdashline
        ~Double Int. &$99.88\%$ ($99.88\%$) &$12.38\%$ ($99.88\%$) &$99.11\%$ ($99.11\%$) &$91.96\%$ ($99.11\%$)\\
        \hdashline
        ~Dubins Car &$89.72\%$ ($98.83\%$) &$15.65\%$ ($95.56\%$) &$100\%$ ($100\%$) &$92.86\%$ ($100\%$)\\
        \hdashline
        ~Manipulator &$91.78\%$ ($91.91\%$) &$77.94\%$ ($78.33\%$) &$87.50\%$ ($87.50\%$) &$100\%$ ($100\%$)\\
    \end{tabular}
    \label{tab2:comparison_warmstarting}
    \vspace{-15pt}
\end{table*}
\begin{figure}[t]
    \subfloat[\scriptsize{ICS warm-start}]{\includegraphics[trim={14cm 0 15cm 0.41cm},clip,width = .5\columnwidth,height=3.9cm]{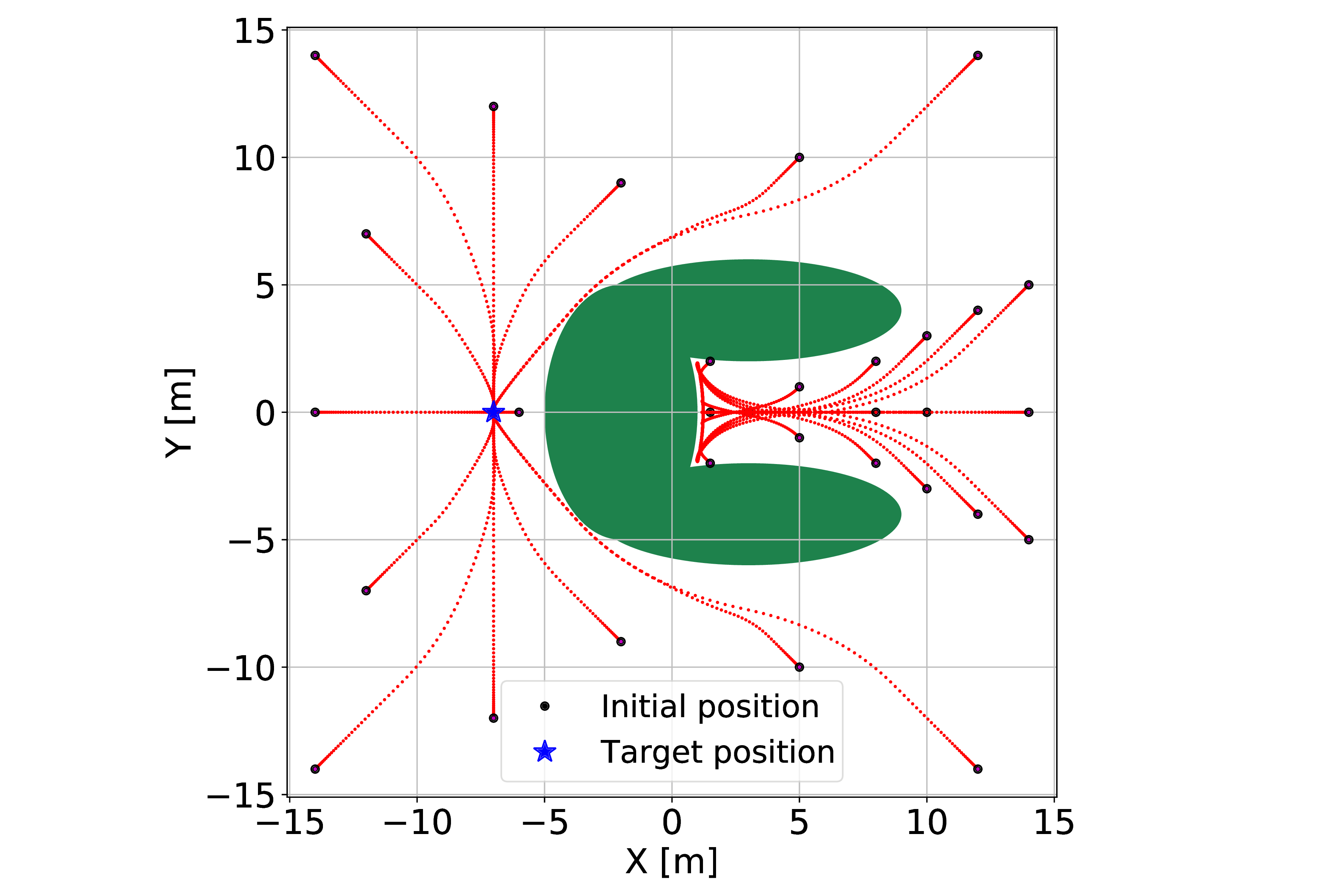}}\hfill 
    \subfloat[\scriptsize{CACTO warm-start}]{\includegraphics[trim={14cm 0 15cm 0.41cm},clip,width = .5\columnwidth,height=3.9cm]{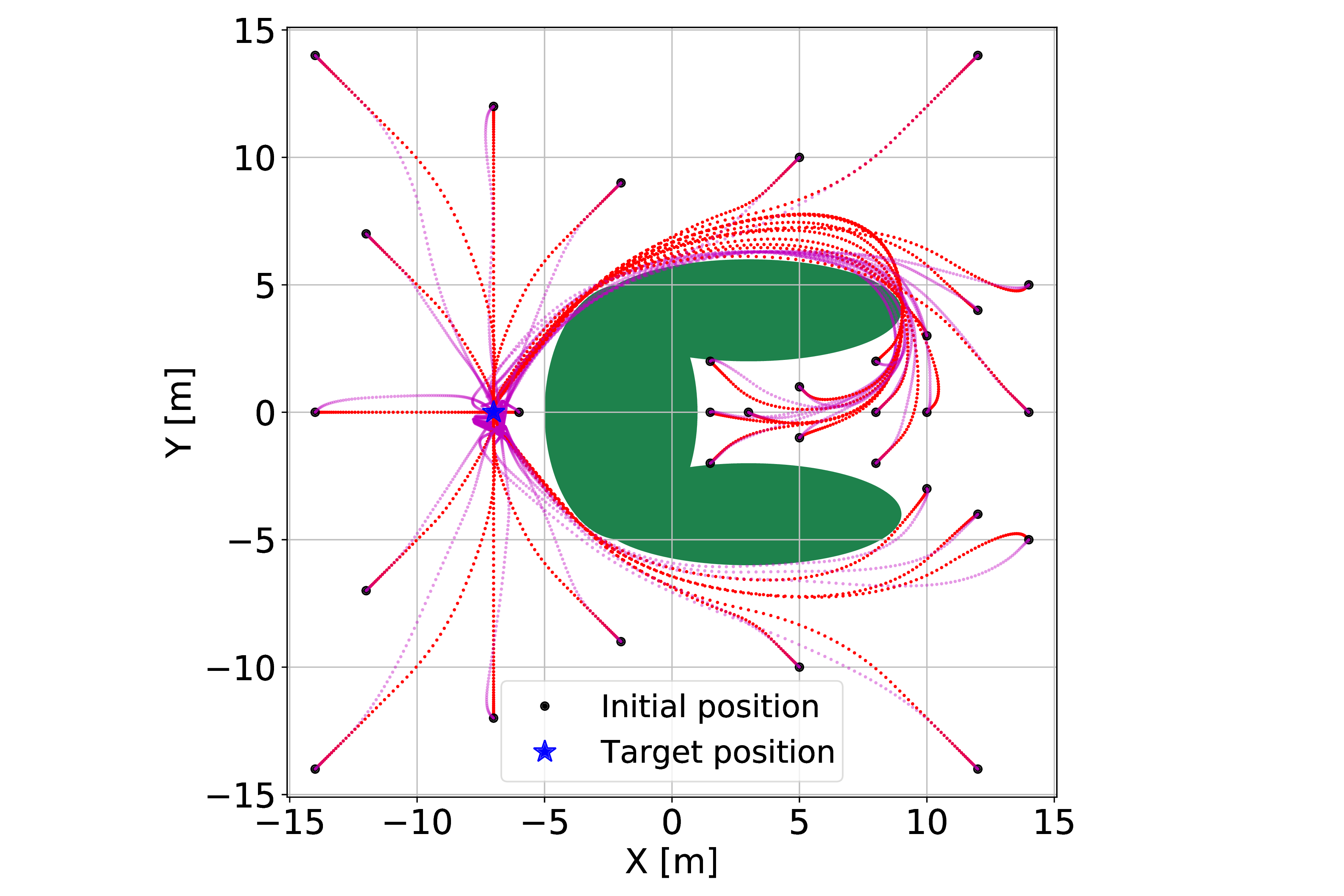}}
    \caption{Optimal trajectories (red) of the 2D double integrator obtained with ICS and CACTO warm-starts. In (b), the magenta lines represent the CACTO policy rollouts.}
    \label{TO_Int2D}
    \vspace{-5pt}
\end{figure}
\begin{figure}[t]
    \subfloat[\scriptsize{CACTO vs. Random warm-starts}]{\includegraphics[trim={0 0 0 1cm},clip,width = .49\columnwidth,height=3.8cm]{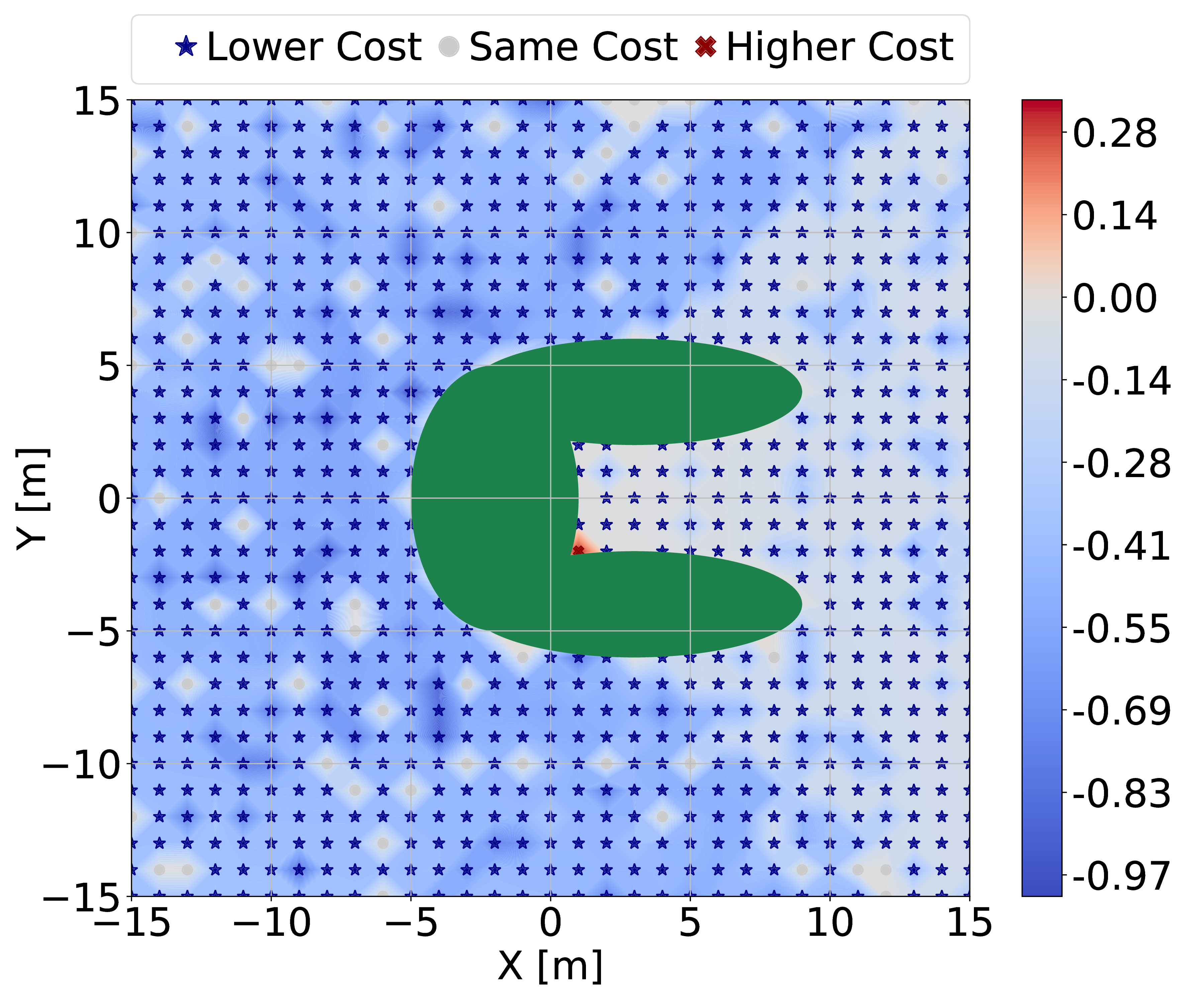}}\hfill 
    \subfloat[\scriptsize{CACTO vs. ICS warm-starts}]{\includegraphics[trim={0 0 0 1cm},clip,width =  .49\columnwidth,height=3.8cm]{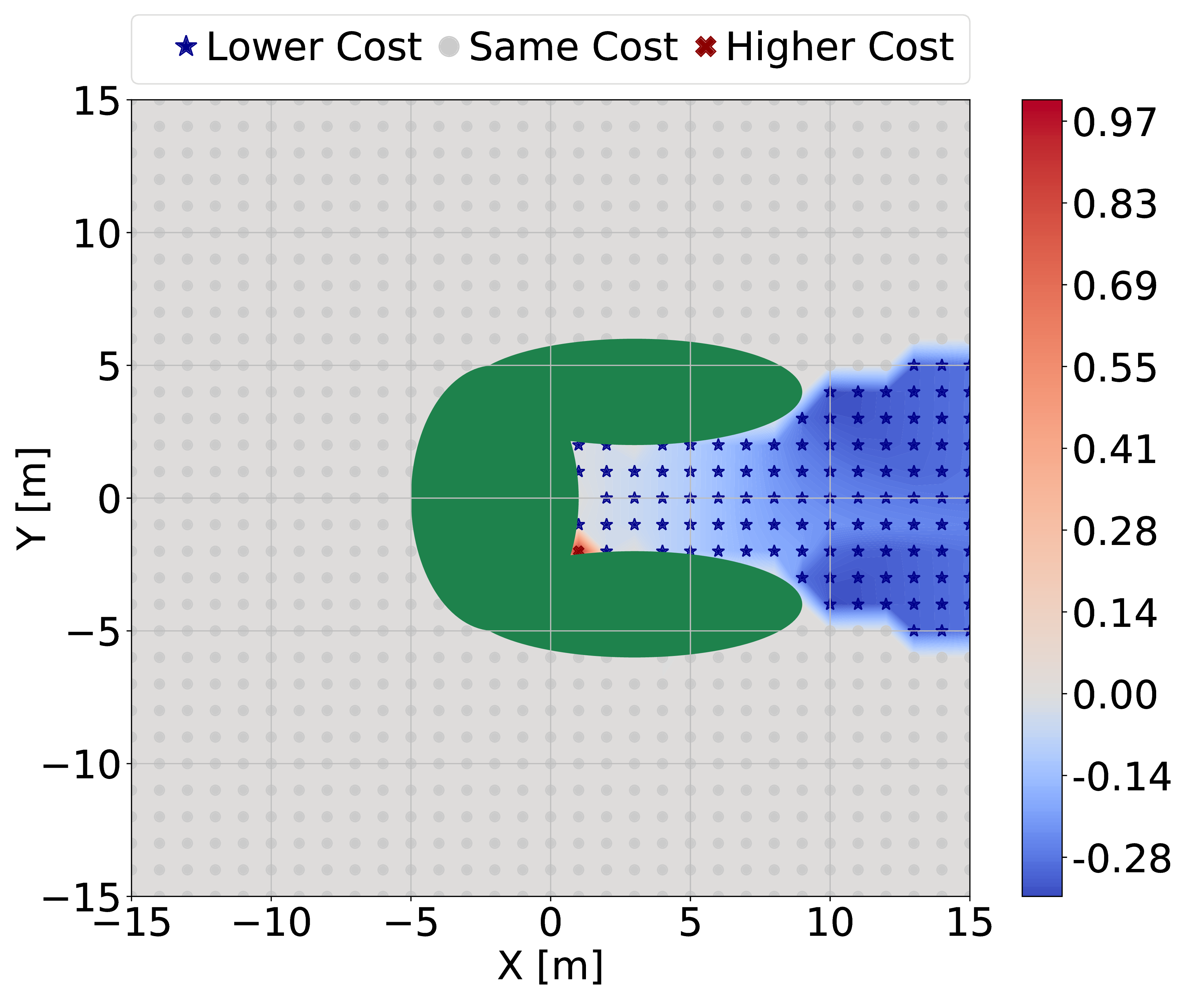}}
    \caption{Double integrator: cost difference between CACTO warm-start and other two warm-starts normalized by the largest cost difference. 
    }
    \label{DoubleInt2D}
    \vspace{-5pt}
\end{figure}
\vspace{-5pt}
\subsection{Single Integrator}
The first problem we considered is a simple 2D single integrator that has to reach a target point avoiding a C-shaped obstacle. The state is $[x,y,t] \in \mathbb{R}^3$, while the controls are the 2D velocities $[v_x,v_y] \in \mathbb{R}^2$, bounded in $[-4,4]\frac{m}{s}$.

The task is simple except when the system starts from the \emph{Hard Region} ($x\in[1,15]$ m and $y\in[-5,5]$ m), where TO can easily get stuck in local minima. Most of the times this means that the resulting trajectories point immediately towards the target, making the 2D point stay at the right boundary of the vertical ellipse. But since the C-shaped obstacle is modelled as three overlapping ellipses represented by three soft penalties~\eqref{cost_term3} in the cost function, it can also occur that the 2D point passes through them to reach the target, albeit at high cost.
Table.~\ref{tab2:comparison_warmstarting} reports the percentage of the time that warm-starting TO with CACTO rollouts leads to lower costs than using random values and the initial conditions for $x$ and $y$ and 0 for the remaining variables as initial guess. CACTO warm-start wins over the other two techniques, particularly if we consider the agent starting from the \emph{Hard Region} where warm-starting TO with CACTO leads to lower-cost solutions $99.11\%$ and $91.96\%$ of the time compared to using random values and the ICS as the initial guess, respectively.
\begin{figure}[t]
    \subfloat[\scriptsize{CACTO vs. Random warm-starts}]{\includegraphics[trim={0 0 0 0.35cm},clip,width = .49\columnwidth, height=3.8cm]{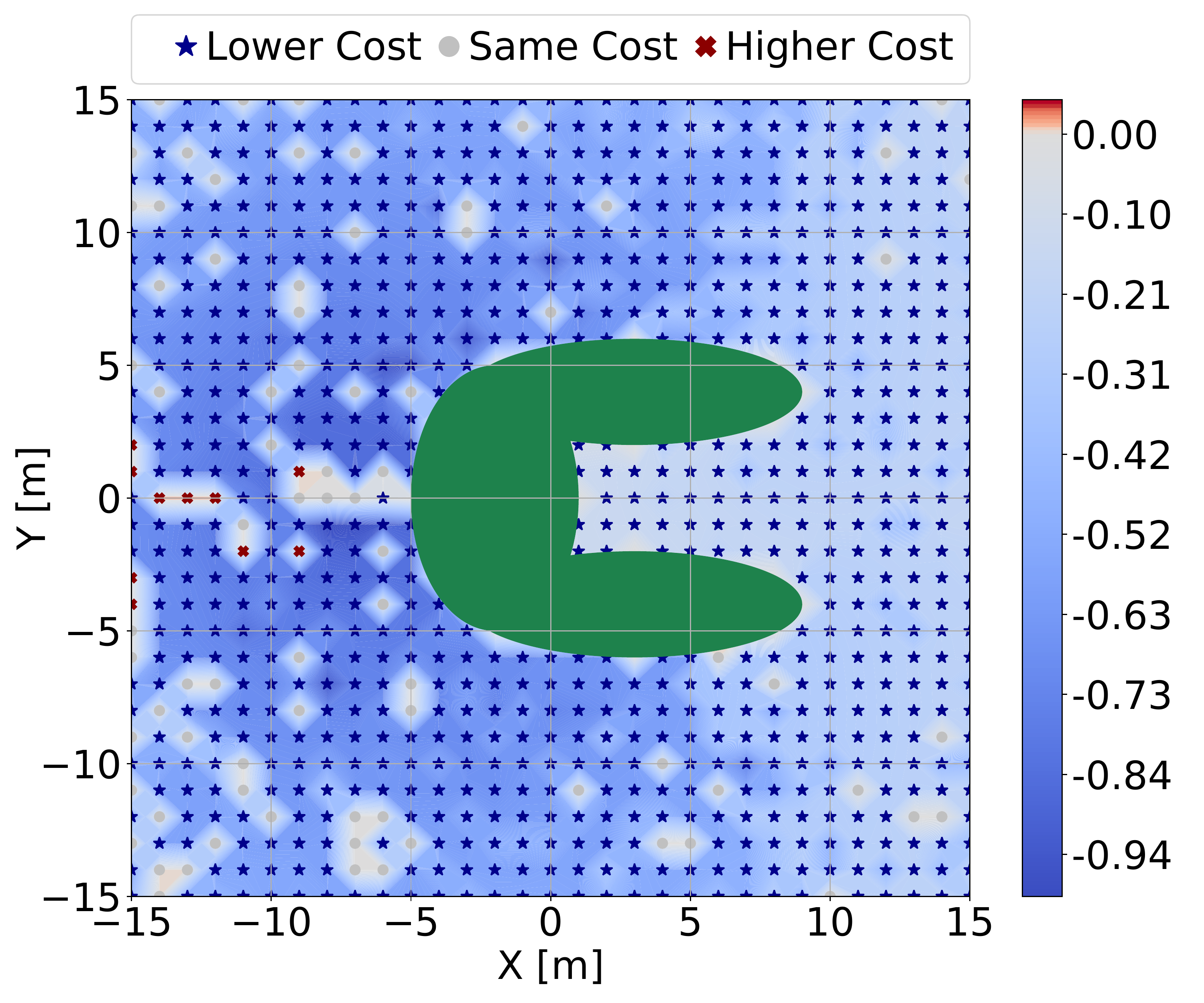}}\hfill 
    \subfloat[\scriptsize{CACTO vs. ICS warm-starts}]{\includegraphics[trim={0 0 0 0.35cm},clip,width = .49\columnwidth,height=3.8cm]{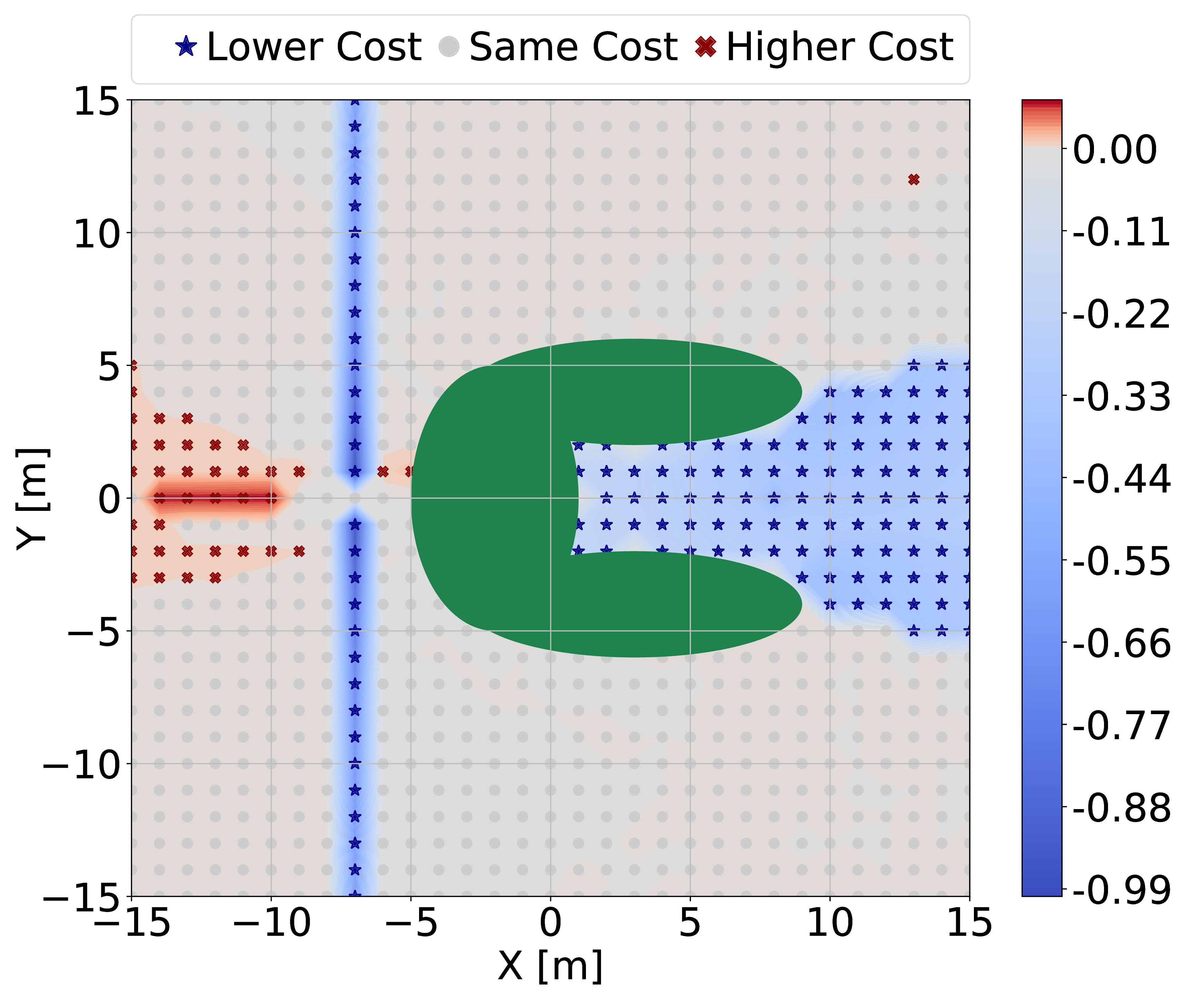}}
    \caption{Dubins car model: cost difference between CACTO warm-start and other two warm-starts normalized by the largest cost difference. 
    }
    \label{Car}
    \vspace{-7pt}
\end{figure}
\vspace{-5pt}
\subsection{Double Integrator}
\label{ssec:double_integrator}
We run the same experiment with the 2D point considering a double integrator dynamics. Therefore, now the state includes also the velocities $[x,y,v_x,v_y,t] \in \mathbb{R}^5$ and the control inputs are the accelerations $[a_x,a_y] \in \mathbb{R}^2$. As illustrated in Fig.~\ref{TO_Int2D}(b), the rollouts of the CACTO policy are already close to the globally optimal trajectories, therefore TO only needs to refine them when they are used as initial guess. \newtext{We refer to trajectories as globally optimal if their cost is the lowest among those obtained solving several TO problems warm-started with random initial guesses.}  
Fig.~\ref{DoubleInt2D} shows instead the cost difference (normalized by the highest difference in absolute value) when TO is warm-started with rollouts of the CACTO policy in place of random values or the ICS, respectively. Fig.~\ref{DoubleInt2D}(a) clearly shows that using rollouts of the policy learned by CACTO as an initial guess makes TO find lower-cost solutions from almost any initial state compared to those found with a random initial guess. In Fig.~\ref{DoubleInt2D}(b) instead, we can notice that ICS are a good initial guess for the majority of initial states and TO finds the same solutions as when warm-started with CACTO rollouts. However, when starting the agent in the \emph{Hard Region}, the ICS warm-start makes TO find poor local minima, where the agent remains stuck in that region or passes through the obstacle, whereas CACTO enables TO to successfully bring the agent to the target without touching the obstacle.
Also in this case, \oldtext{considering}\newtext{in} the \emph{Hard Region}, warm-starting TO with CACTO rollouts rather than with random values or ICS makes TO find lower costs with the same percentage as in the previous test.
\begin{figure}[t]
    \makebox[\columnwidth][c]{\includegraphics[trim={0 0.43cm 0 0.35cm},clip,width = 0.83\columnwidth,height=5cm]
    {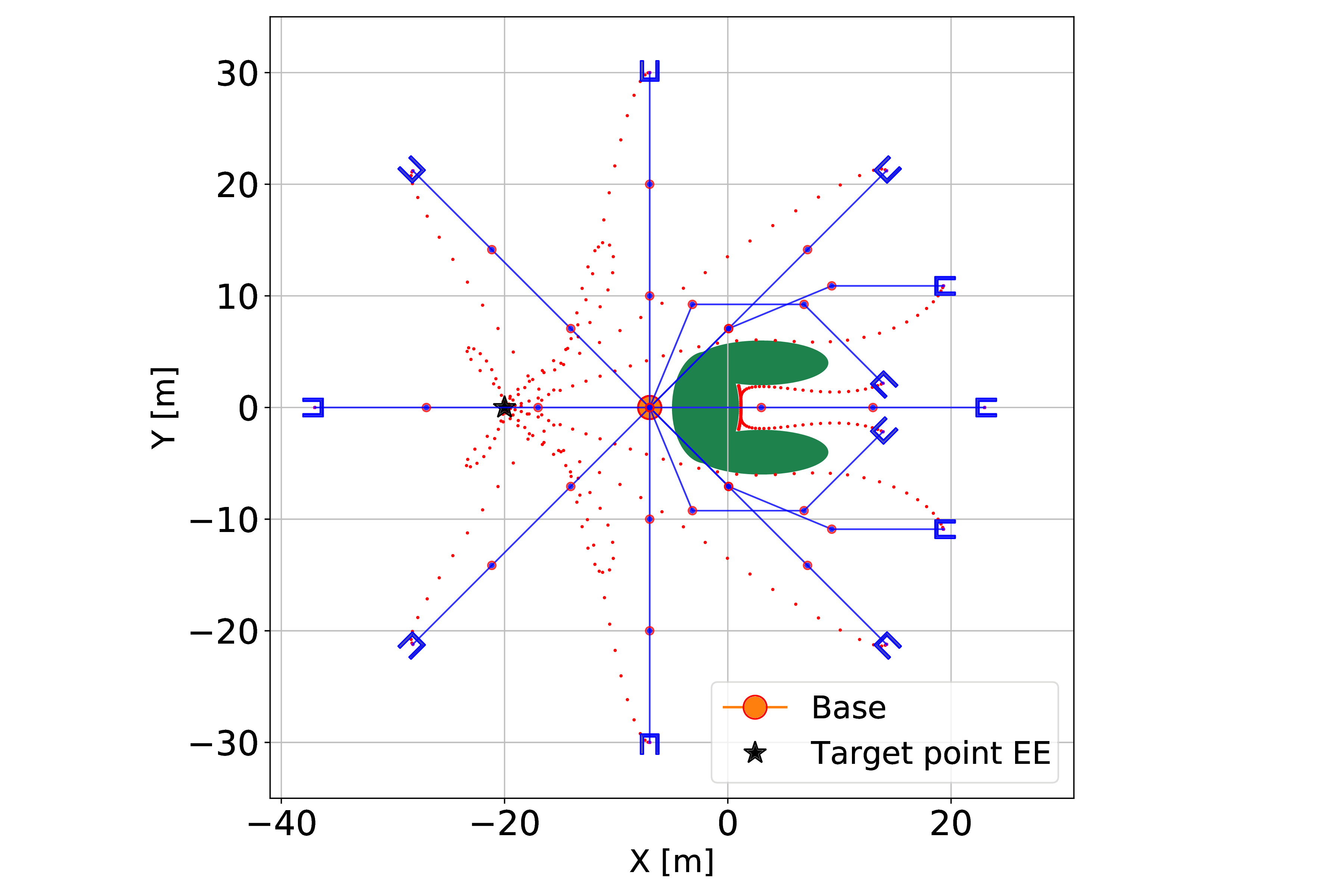}} 
    \caption{TO solutions considering 12 different initial configurations of the manipulator and ICS warm-start. The red dotted lines represent the trajectories performed by the end-effector (EE).}
    \label{TOmanipulator}
\end{figure}
\begin{figure}[t]
     \makebox[\columnwidth][c]{\includegraphics[trim={0 0.38cm 0 0.3cm},clip,width = 0.65\columnwidth ,height=5cm]{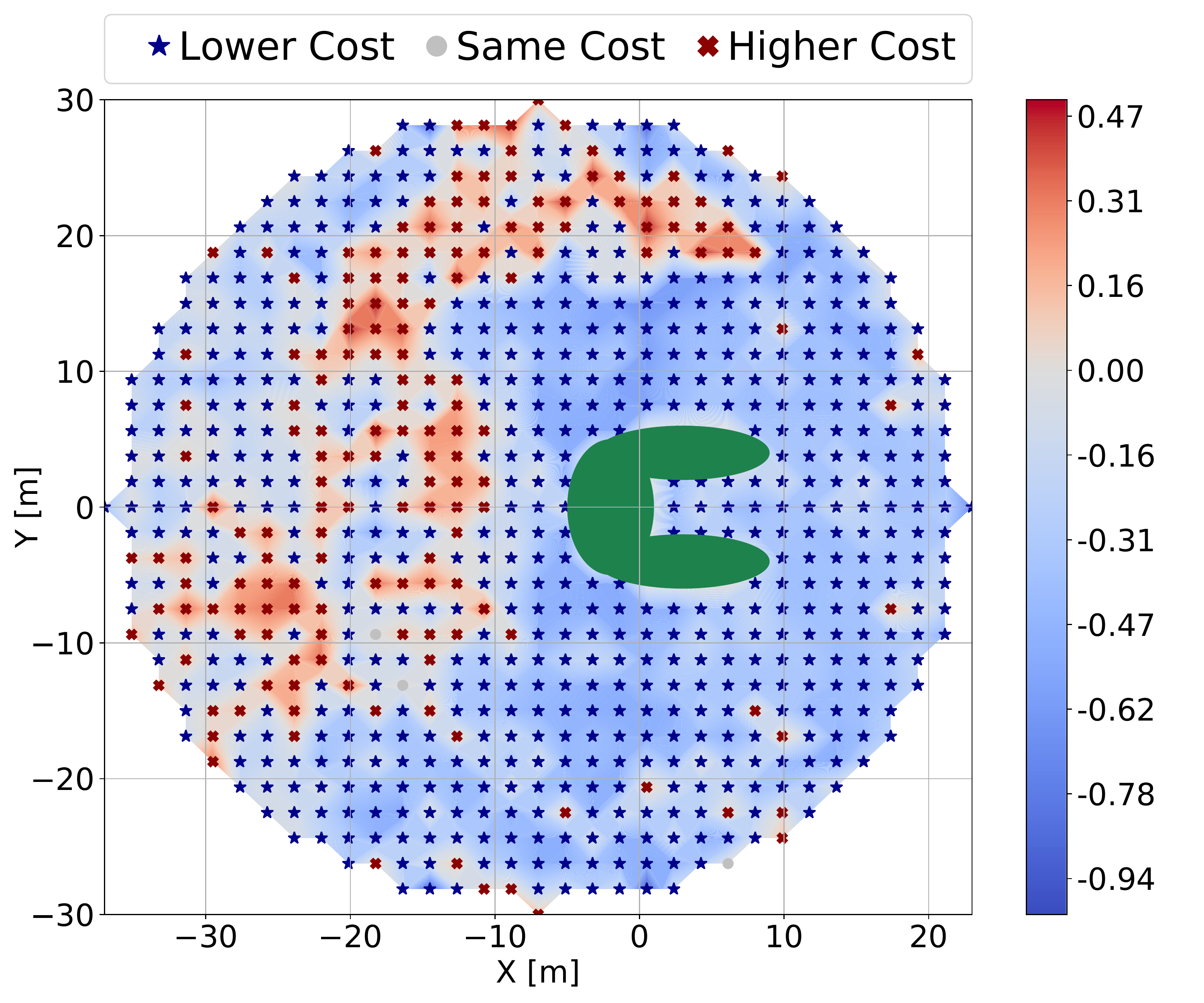}} 
    \caption{3-DoF Manipulator: normalized cost difference when warm-starting TO with CACTO compared to using the initial conditions as initial guess for the joint positions and 0 for the remaining variables. 
    }
    \label{ManipulatorICSinit}
    \vspace{-5pt}
\end{figure}
\vspace{-20pt}
\subsection{Dubins Car}
To test CACTO with a higher-dimensional system, we selected a jerk-controlled version of the so-called \emph{Dubins car model} \cite{dubins}, while keeping the same environment and cost function of the previous tests. Now the state has size $6$ because it includes the steering angle $\theta$, the tangential velocity $v$, and acceleration $a$, in addition to the coordinates $x$ and $y$ of the car's center of mass and time $t$: $s = (x,y,\theta,v,a,t) \in \mathbb{R}^6$. The control is still bi-dimensional and it consists in the steering velocity and the jerk $[\omega,j] \in \mathbb{R}^2$.
When the car starts from the \emph{Hard Region}, TO warm-started with CACTO always finds a lower cost than that obtained by warm-starting TO with \newtext{\oldtext{ICS or}} random values, \newtext{and it does so the $92.86\%$ of the time when compared to using ICS warm-start,} as reported in Table~\ref{tab2:comparison_warmstarting}.
In addition, Fig.~\ref{Car}(b) shows that TO warm-started with ICS is not able to find the globally optimal solution also when the car starts from a point along the vertical line passing through the target point. This is due to the fact that in that region the gradient of the cost is zero along the initialization itself.
%


%
%
\vspace{-5pt}
\subsection{3-DoF Planar Manipulator}
Finally, we tested our algorithm on a problem with a 7D state and 3D control space. It consists of a 3-DoF planar manipulator with base fixed at $[-7,0]$ m, working in the same environment of the previous tests, whose end-effector has to reach a target point located at $x_g = -20$ m and $y_g=0$ m. The cost function is always~\eqref{cost_function}, where $x$ and $y$ represent the coordinates of the end-effector. Fig.~\ref{TOmanipulator} shows some solutions found by TO when warm-started with the \oldtext{initial conditions}\newtext{ICS}. It may seem that TO succeeds in some cases in finding the globally optimal trajectories for those initial configurations, but actually all of them are only locally optimal, as shown by the negative cost difference in Fig.~\ref{ManipulatorICSinit} when those solutions are compared to the ones obtained by warm-starting TO with CACTO.
This test is harder, meaning that it is much easier for TO to find poor local minima, not only due to the larger state-action space, but also because the actor has to intrinsically learn the manipulator kinematics\oldtext{ to learn the control policy}. Indeed, considering the whole manipulator workspace, CACTO warm-start wins over using the ICS or random values $77.94\%$ and $91.78\%$ of the time, respectively, as reported in Table~\ref{tab2:comparison_warmstarting}.

%
\begin{figure}[t]
     \makebox[\columnwidth][c]{\includegraphics[trim={0 0cm 3.6cm 2.4cm},clip,width = \columnwidth]{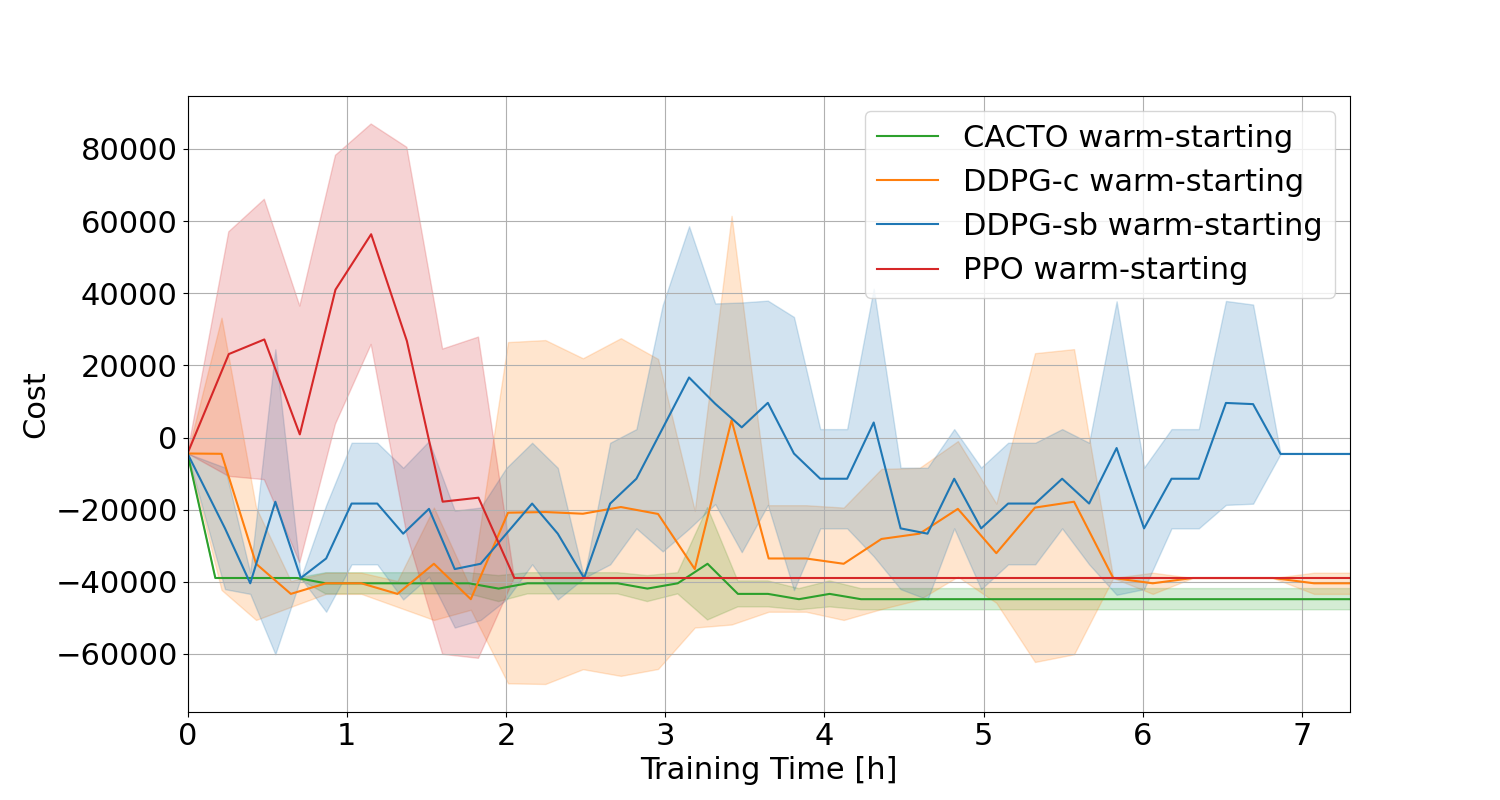}} 
    \caption{Cost of the trajectory found by TO (starting the 2D point from $[5,0]$m with $0$ velocity) when warm-started with rollouts of the policy learned with CACTO (\newtext{\oldtext{blue}}\newtext{green})\newtext{, our DDPG (DDPG-c in orange)}\newtext{, Stable Baselines' DDPG (DDPG-sb in blue)} and \newtext{\oldtext{DDPG}}\newtext{PPO} (red) as their trainings proceed. The shaded area denotes the standard deviation over 5 different runs.
    }
    \label{CACTOvsDDPG}
    \vspace{-5pt}
\end{figure}
\vspace{-5pt}
\subsection{Comparison with DDPG \newtext{ and PPO} performance}
To compare CACTO and DDPG, we have used our implementation of DDPG with hand-tuned hyper-parameters \newtext{(DDPG-c)}, \newtext{as well as that from Stable Baselines (DDPG-sb),} to find a warm-starting policy for the 2D double integrator test of Section~\ref{ssec:double_integrator}. \newtext{In addition, we have made the comparison also with PPO from Stable Baselines.}  Fig.~\ref{CACTOvsDDPG} shows the costs obtained by TO as functions of the computation time allocated to \newtext{\oldtext{CACTO and DDPG}}\newtext{each algorithm} to train \newtext{\oldtext{their}}\newtext{its} warm-starting polic\newtext{\oldtext{ies}}\newtext{y}. Clearly, CACTO was faster than \newtext{\oldtext{DDPG}}\newtext{the other RL algorithms} in learning a policy enabling TO to find lower-cost solutions, and its training was also more stable (rollouts with lower variance).
We have reported only the results with the double integrator because, despite our efforts, we did not manage to make DDPG converge in a reasonable number of episodes in the car and manipulator tests. Indeed, as stressed in \oldtext{[32]-[34]}\newtext{\cite{ddpg_instability1}}, DDPG is known to be very sensitive to its hyper-parameters making it prone to converge to poor solutions or even diverge. 

Besides this quantitative comparison, we could also try to qualitatively compare our results with the ones reported for DDPG in~\cite{ddpg}. Among their tests, the most similar to ours is the \textit{fixedReacher}, where a 3-DoF arm must reach a fixed target; this has the same dimensions as our manipulator test. However, our cost/reward function is highly non-convex, particularly due to the obstacle avoidance term, while theirs was quadratic and consisted of only two terms. This makes it harder to reach convergence in our DNN training. Using CACTO in such a simple setting, with a convex cost-function, would not make sense: TO would converge to the global optimum even with a trivial warm-start (indeed DDPG converged to roughly the same policy found by iLQG). Consequently, it makes no sense to use these results for a comparison with CACTO, which should instead be based on highly non-convex problems, where TO cannot find the global optimum with a naive initial guess. 
 

\section{Conclusions}
This paper presented a new algorithm for finding quasi-optimal control policies. In particular, we addressed the open problems affecting Trajectory Optimization and \oldtext{D}\newtext{d}eep Reinforcement Learning: the possibility of getting stuck in poor local minima when TO is not properly warm-started on one side, and the \oldtext{long training times (in addition to the strong dependence on the exploration process)}\newtext{low sample efficiency} of \oldtext{D}\newtext{d}eep RL. The proposed algorithm relies on the combination of TO and \oldtext{deep }RL in such a way that their interplay guides \oldtext{in an efficient way}\newtext{efficiently} the RL state-control space exploration process towards the globally optimal control policy, to be used then as TO initial guess provider.

We provided a \oldtext{proof of policy improvement}\newtext{global convergence proof} for a discrete-space version of our algorithm, which gives insight into its underlying theoretical principles.

We have shown the effectiveness of the algorithm testing it on four systems of increasing complexity, with highly non-convex cost functions, where TO struggles to find ``good'' solutions. Even though preliminary, our results validate our methodology and unlock a wide range of possible applications. 

Despite our encouraging results, we believe that CACTO can still be improved, in particular concerning its computation time. We are investigating how to speed up the learning of the critic using Sobolev Training~\cite{Sobolev}, which could exploit the derivatives of the \emph{Value function} computed by a DDP-like TO algorithm~\oldtext{[37]-[39]}\newtext{\cite{Crocoddyl}}. \newtext{We are also considering to implement CACTO using sampling-based multi-query planners, as done with PRMs in \cite{Jelavic}.} 
Moreover, besides using CACTO as initial guess provider for TO, where the RL and TO environments must match, we are interested in using it as a deep RL technique to find directly a control policy, where the two environments do not need to match (\emph{e.g.}, the TO problem could be a simplified version of the RL environment without noise sources).
Finally, we plan to extend CACTO to optimize also hardware parameters, to create a concurrent design (co-design) framework robust against the local minima problem, which is a crucial issue in co-design applications~\cite{redundant_actuation}.

\end{document}